\documentclass[a4paper,12pt]{article}
\usepackage{amsmath}
\usepackage{algorithmic}
\usepackage{algorithm}
\usepackage{amsthm}
\usepackage{amsfonts}
\usepackage{hyperref}
\newtheorem{theorem} {Theorem}
\newtheorem{lemma} {Lemma}
\newtheorem{claim} {Claim}
\newtheorem{corr} {Corollary}
\usepackage{times}
\newcommand{\ignore}[1]{}

\title{Universal MMSE Filtering With Logarithmic Adaptive Regret}

\author{Dan Garber\\  
\small{Technion - Israel Inst. of Tech.} \\
\small{dangar@cs.technion.ac.il} 
\and Elad Hazan \\
\small{Technion - Israel Inst. of Tech.} \\ 
\small{ehazan@ie.technion.ac.il}}

\begin{document}
\maketitle

\begin{abstract}
We consider the problem of online estimation of a real-valued signal corrupted by oblivious zero-mean noise using  linear estimators. The estimator is required to iteratively predict the underlying signal  based on the current and several last noisy observations, and its performance is measured by the mean-square-error. We describe and analyze an algorithm for this task which: 
\begin{enumerate}
\item
Achieves logarithmic adaptive regret against the best linear filter in hindsight. This bound is assyptotically tight, and resolves the question of Moon and Weissman \cite{MWPaper}. 
\item
Runs in linear time in terms of the number of filter coefficients. Previous constructions required at least quadratic time. 
\end{enumerate}
\end{abstract}

\section{Introduction}
We consider the problem of filtering: designing algorithms for the causal estimation of a real valued signal from noisy observations. The filtering algorithm observes at each iteration a noisy signal component, and is required to estimate the corresponding underlying signal component based on the current and past noisy observations alone.
\\
We consider finite fixed-length linear filters that combine the current and several last noisy observations for prediction of the current  underlying signal component.  Performance is measured by the  mean square error over the entire signal. Following the setting in \cite{MWPaper}, we assume that the underlying signal is an arbitrary bounded signal, possibly even adversarial, and that it is corrupted by an additive zero-mean, time-independent, bounded noise with known constant variance \footnote{The justification of \cite{MWPaper} for assuming that the variance is a known constant is that this variance could be learned by sending a training sequence in the beginning of transmission.}. 
\\
The approach taken in this paper is to construct a {\it universal} filter - i.e. an adaptive filter whose performance we compare to an optimal offline filter with full knowledge of the signal and noise. The metric of performance is thus regret - or the difference between the total mean squared error incurred by our adaptive filter, and the total mean square error of the offline benchmark filter. 
\\
The question of competing with a fixed offline filter was successfully tackled in \cite{MWPaper}. In this paper we consider a more challenging task: competing with the best offline changing filter, where restrictions are placed on how often this optimal offline filter is allowed to change. A more stringent metric of performance what fully captures this notion of competing with an adaptive offline benchmark is called {\it adaptive regret}: it is the maximum regret incurred by the algorithm on any subinterval.  
\\
We present and analyze simple, efficient and intuitive algorithms that attain logarithmic adaptive regret. This bound is tight, and resolves a question posed by Moon and Weissman in \cite{MWPaper}. Along the way, we introduce a simple universal algorithm for filtering, improving the previously known best running time from quadratic in the number of filter coefficients to linear.

\subsection{Related Work}

There has been much work on the problem of estimating a real-valued signal from noisy observations with respect to the MMSE loss over the years. Classical results assume a model in which the underlying signal is stochastic with some known parameters, i.e.  the first and second moments, or require the signal to be stationary, such as the classical work of \cite{WienerPaper}. The special case of linear MMSE filters has received special  attention  due to its simplicity \cite{LinearEstimationRef}. For more recent results on MMSE estimation see  \cite{RobustMMRef1, RobustMMRef2, RobustMMRef3,UnsuperAdapFilterRef}. 
\\
In this work we follow the non-stochastic setting of \cite{MWPaper}: no generating model is assumed for the underlying signal and stochastic assumptions are made on the added noise (that it is zero-mean, time-independent with known fixed variance). In this setting, while considering finite linear filters, \cite{MWPaper} presented an online algorithm that achieves logarithmic expected regret with respect to the entire signal. The computational complexity of their algorithm is proportional to a quadratic in the linear filter size. 
\\
Henceforth we build on recent results from the emerging online learning framework called  online convex optimization  \cite{ZinkPaper,LogRegretPaper}. For our adaptive regret algorithm, we use tools from the framework presented in \cite{AdapRegretPaper} to derive an algorithm that achieves logarithmic expected regret on any interval of the signal.

\section{Preliminaries}

\subsection{Online convex optimization} 
In the setting of online convex optimization (OCO) an online algorithm $\mathcal{A}$ is iteritevly required to make a prediction by choosing a point $x_t$ in some convex set $\mathcal{K}$. The algorithm then incurs a loss $l_t(x_t)$, where $l_t(x):\mathcal{K}\rightarrow\mathbb{R}$ is a convex function. The emphasis in this model is that on iteration $t$, $\mathcal{A}$ has only knowledge of the loss functions in previous iterations $l_1(x),...,l_{t-1}(x)$ and thus $l_t(x)$ may be chosen arbitrarily and even adversely. The standard goal in this setting is to minimize the difference between the overall loss of $\mathcal{A}$ and that of the best fixed point $x^*\in{\mathcal{K}}$ in hindsight. This difference is called regret and it is formally given by,
\begin{eqnarray*}
R_T(\mathcal{A}) = \sum_{t=1}^Tl_t(x_t) - \min_{x\in{\mathcal{K}}}\sum_{t=1}^Tl_t(x) 
\end{eqnarray*}
A stronger measure of performance requires the algorithm to have little regret on any interval $I=[r,s]\subseteq{[T]}$ with respect to the best fixed point $x_I^*\in{\mathcal{K}}$ in hindsight in this interval. This measure is call adaptive regret and it is given by  ,
\begin{eqnarray*}
AR_T(\mathcal{A}) = \sup_{I = [r,s]\subset{[T]}}\lbrace{\sum_{t=r}^sl_t(x_t) - \min_{x\in{\mathcal{K}}}\sum_{t=r}^s(l_t(x)}\rbrace
\end{eqnarray*}
\subsection{Problem Setting}
Let $x_t$ be a real-valued, possibly adversarial, signal bounded in the range $[-B_X...B_X]$. The signal $x_t$ is corrupted by an additive zero-mean time independent noise $n_t$ bounded in the range $[-B_N...B_N]$ with known time-invariant variance $\sigma^2$. An estimator observes on time $t$ the noisy signal $y_t = x_t + n_t$, and is required to predict $x_t$ by taking a linear combination of the observations $y_t,y_{t-1},...,y_{t-d+1}$ where $d$ is the order of the filter. That is, the estimator chooses on time $t$ a filter $w_t\in{\mathbb{R}^d}$ and predicts according to $w_t^{\top}Y_t$ where $Y_t\in{\mathbb{R}^d}$ and $Y_t(i) = y_{t-i+1}$, $1\leq i \leq d$. The loss of the estimator after $T$ iterations is given by the mean-square-error $\frac{1}{T}\sum_{t=1}^T(x_t - w_t^{\top}Y_t)^2$. \\
In case $x_t$ is observable to the online algorithm, minimizing the regret and the adaptive regret is fairly easy using the framework of OCO with the loss functions $l_t(w_t) = (x_t -w_t^{\top}Y_t)^2$. However in our case, the algorithm only observes the noisy signal $y_t$ and thus online convex optimization algorithms could be directly used. Denoting $\hat{l}_t(w) = (y_t - w^{\top}Y_t)^2 + 2w^{\top}c$ where $c\in{\mathbb{R}^d}$, $c = (\sigma^2,0...,0)$, it was pointed out in \cite{MWPaper} that if $w_t$ depends only on the observations $y_1,...,y_{t-1}$, then for any $w\in{\mathbb{R}^d}$ it holds that, 
\begin{eqnarray}\label{ExpectationEq}
\mathbb{E}\left[{\sum_{t=1}^T\hat{l}_t(w_t) - \sum_{t=1}^T\hat{l}_t(w)}\right] = \mathbb{E}\left[{\sum_{t=1}^Tl_t(w_t) - \sum_{t=1}^Tl_t(w)}\right]
\end{eqnarray}
Thus by using OCO algorithms with the estimated loss functions $\hat{l}_t(w)$ we may minimize the expected regret with respect to the actual losses $l_t(w)$. Thus a simple algorithm such as \cite{ZinkPaper} immediately gives a $O(\sqrt{T})$ bound on the expected regret as well as on the expected adaptive regret with respect to the true losses $l_t(w)$, as long as we limit the choice of the filter to a euclidean ball of constant radius.

\subsection{Using Strong-Convexity and Exp-Concavity}
Given a function $f(x): \mathcal{K}\rightarrow\mathbb{R}$ we denote by $\nabla{}f(x)$ the gradient vector of $f$ at point $x$ and by $\nabla^2f(x)$ the matrix of second derivatives, also known as the Hessian, of $f$ at point $x$. $f(x)$ is convex at point $x$ if and only if $\nabla^2f(x) \succeq{0}$, that is its Hessian is positive semidefinite at $x$.\\ 
We say that $f$ is {\it $H$-strongly-convex}, for some $H>0$, if for all $x\in{\mathcal{K}}$ it holds that $\nabla^2f(x) \succeq{}H\textbf{I}$, where $\textbf{I}$ is the identity matrix of proper dimension. That is all the eigenvalues of $\nabla^2f(x)$ are lower bounded by $H$ for all $x\in{\mathcal{K}}$. \\
We say that $f$ is {\it $\alpha$-exp-concave}, for some $\alpha > 0$, if the function $\exp{(-\alpha{}f(x))}$ is a concave function of $x\in{\mathcal{K}}$. It is easy to show that given a function $f$ such that $f\succeq{}H\textbf{I}$ and $\max_{x\in{\mathcal{K}}}\Vert{\nabla{}f(x)}\Vert_2 \leq G$ it holds that $f$ is $\frac{H}{G^2}$-exp-concave.\\
In case all loss functions are $H$-strongly-convex or $\alpha$-exp-concave, there exists algorithms that achieve logarithmic regret and adaptive regret \cite{LogRegretPaper, AdapRegretPaper}. \\
In our case, the Hessian of the loss function $\hat{l}_t(w)$ is given by the random matrix $\nabla^2\hat{l}_t(w) = 2Y_tY_t^{\top}$ which is positive semidefinite and it holds that
\begin{eqnarray}\label{ExpectationOfMatrix}
\mathbb{E}\left[{Y_tY_t^{\top}}\right] = \mathbb{E}\left[{X_tX_t^{\top} + N_tX_t^{\top} + X_tN_t^{\top} + N_tN_t^{\top}}\right]
= X_tX_t^{\top} + \sigma^2\textbf{I} \succeq \sigma^2\textbf{I}
\end{eqnarray}
Nevertheless, in worst case, $\hat{l}_t(w)$ need not be strongly-convex or exp-concave and thus algorithms such as \cite{LogRegretPaper, AdapRegretPaper} could not be directly used in order to get logarithmic expected regret and adaptive regret. 

\section{A Simple Gradient Decent Filter}
In this section we describe how the problem of the loss functions $\hat{l}_t$ not necessarily being strongly-convex or exp-concave could be overcome and introduce a simple gradient decent algorithm based on \cite{LogRegretPaper} that achieves $O(\log{T})$ expected regret. \\
For time $t$ and filter $w\in{\mathbb{R}^d}$ we define the following loss functions.

\begin{eqnarray}\label{NewLossFunc}
L^k_t(w) = \sum_{\tau = t-k+1}^t\hat{l}_t(w) + (w-w_t)^{\top}\left({(k-d+1)\sigma^2\textbf{I} - \sum_{\tau=t-k+d}^tY_tY_t^{\top}}\right)(w-w_t)
\end{eqnarray}
where $w_t$ is the filter that was used by the algorithm for prediction in time $t$ and $k\in{\mathbb{N}^+}$ is a parameter. \\
Our Gradient Decent filtering algorithm is given below.

\begin{algorithm}
\caption{GDFilter}
\label{GDFilter}
\begin{algorithmic}[1]
\STATE Input: $k \in{\mathbb{N}^+}$, $H\in{\mathbb{R}^+}$, $R\in{\mathbb{R}^+}$.
\STATE Let $w_1 = \textbf{0}_d$
\FOR{$c = 1...$}
\FOR{$t = (c-1)k+1...ck$}
\STATE predict: $x_t = w_c^{\top}Y_t$.
\ENDFOR
\STATE $\eta_c \leftarrow \frac{1}{Hc}$
\STATE $\tilde{w}_{c+1} \leftarrow w_c - \eta_c\nabla{}L^k_c(w_c)$.
\IF{$\Vert{\tilde{w}_{c+1}}\Vert > R$}
\STATE $w_{c+1} \leftarrow \tilde{w}_{c+1}\cdot{}\frac{R}{\Vert{\tilde{w}_{c+1}}\Vert}$.
\ELSE
\STATE $w_{c+1} \leftarrow \tilde{w}_{c+1}$.
\ENDIF
\ENDFOR
\end{algorithmic}
\end{algorithm}

We have the following theorem and corollary.
\begin{theorem}\label{GDThr}
Let $w_t$ be the filter used by algorithm \ref{GDFilter} for prediction in time $t$. Let $k$ = $2d$ and $H = d\sigma^2$. Algorithm \ref{GDFilter} achieves the following regret bound,
\begin{eqnarray*}
\mathbb{E}\left[{\sum_{t=1}^Tl_t(w_t)}\right] - \min_{w\in{\mathbb{R}^d},\Vert{w}\Vert \leq R}\mathbb{E}\left[{\sum_{t=1}^Tl_t(w)}\right] = O\left({\frac{d^3R^2(B_X+B_N)^4}{\sigma^2}\log{T}}\right) \\
\end{eqnarray*}
\end{theorem}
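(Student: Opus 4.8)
The plan is to exploit the fact that, although each estimated loss $\hat l_t$ is only positive semidefinite, the surrogate losses $L^k_c$ on which Algorithm~\ref{GDFilter} performs its gradient step are \emph{deterministically} strongly convex, so that the logarithmic-regret bound for online gradient descent on strongly convex losses from \cite{LogRegretPaper} applies to the sequence $L^k_1,L^k_2,\dots$; we then translate a regret bound measured in the $L^k_c$'s into a bound on the true expected regret, using that the correction terms built into $L^k_c$ have nonpositive expectation.

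\textbf{Step 1: the surrogate is strongly convex with bounded gradients.} Note $L^k_c$ is a quadratic in $w$, so its Hessian is constant. Differentiating \eqref{NewLossFunc}, the block of $\hat l_\tau$'s contributes $2\sum_{\tau=t-k+1}^{t}Y_\tau Y_\tau^{\top}$ and the correction contributes $2(k-d+1)\sigma^2\textbf{I}-2\sum_{\tau=t-k+d}^{t}Y_\tau Y_\tau^{\top}$; since the second index range is contained in the first, the rank-one terms partially cancel and $\nabla^2 L^k_c(w)=2\sum_{\tau=t-k+1}^{t-k+d-1}Y_\tau Y_\tau^{\top}+2(k-d+1)\sigma^2\textbf{I}\succeq 2(k-d+1)\sigma^2\textbf{I}$, which for $k=2d$ is $\succeq 2(d+1)\sigma^2\textbf{I}\succeq H\textbf{I}$ with $H=d\sigma^2$, regardless of the noise. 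Writing $M_c:=(k-d+1)\sigma^2\textbf{I}-\sum_{\tau=t-k+d}^{t}Y_\tau Y_\tau^{\top}$, so that $\nabla L^k_c(w)=\sum_{\tau}\nabla\hat l_\tau(w)+2M_c(w-w_c)$, and using $\|Y_\tau\|\le\sqrt d\,(B_X+B_N)$, $\|c\|=\sigma^2\le(B_X+B_N)^2$ and $\|w-w_c\|\le 2R$ on the ball of radius $R$, one gets the crude bound $G:=\max_{\|w\|\le R}\|\nabla L^k_c(w)\|=O(d^2R(B_X+B_N)^2)$.

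\textbf{Step 2: logarithmic regret in the surrogate, and translation.} Algorithm~\ref{GDFilter} is exactly projected online gradient descent on $L^k_1,L^k_2,\dots$ over $\{\|w\|\le R\}$ with $\eta_c=1/(Hc)$; since each $L^k_c$ is $H$-strongly convex with gradient norm at most $G$, the analysis of \cite{LogRegretPaper} gives, for every fixed $w$ with $\|w\|\le R$ and every realization of the noise, $\sum_{c=1}^{C}\bigl(L^k_c(w_c)-L^k_c(w)\bigr)\le \frac{G^2}{2H}(1+\log C)$ with $C=\lceil T/k\rceil$ (using a step size based on $H$ rather than the true, larger strong-convexity parameter only helps, as the surplus term $\tfrac{H-\mu}{2}\sum_c\|w_c-w\|^2$ in that analysis is nonpositive). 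Since the correction in \eqref{NewLossFunc} vanishes at $w=w_c$, we have $L^k_c(w_c)=\sum_\tau\hat l_\tau(w_c)$ and $L^k_c(w)=\sum_\tau\hat l_\tau(w)+(w-w_c)^{\top}M_c(w-w_c)$; because $w_t=w_c$ throughout block $c$, summing over $c$ telescopes the $\hat l$-terms, yielding
\[
\sum_{t=1}^{T}\hat l_t(w_t)-\sum_{t=1}^{T}\hat l_t(w)\ \le\ \frac{G^2}{2H}(1+\log C)+\sum_{c=1}^{C}(w-w_c)^{\top}M_c(w-w_c).
\]

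\textbf{Step 3: taking expectations.} The key observation is that $w_c$ is a function of $y_1,\dots,y_{(c-1)k}$ only, whereas (because $k\ge d$) every $Y_\tau$ occurring in $M_c$ depends only on observations from block $c$; hence, conditioning on $y_1,\dots,y_{(c-1)k}$ and invoking \eqref{ExpectationOfMatrix}, $\mathbb{E}\bigl[M_c\mid y_1,\dots,y_{(c-1)k}\bigr]=-\sum_{\tau=t-k+d}^{t}X_\tau X_\tau^{\top}\preceq 0$, so $\mathbb{E}\bigl[(w-w_c)^{\top}M_c(w-w_c)\bigr]=-\sum_\tau\mathbb{E}\bigl[((w-w_c)^{\top}X_\tau)^2\bigr]\le 0$ for every $c$. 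Combined with \eqref{ExpectationEq} (applicable since $w_t$ depends only on $y_1,\dots,y_{t-1}$), this gives $\mathbb{E}[\sum_t l_t(w_t)]-\mathbb{E}[\sum_t l_t(w)]\le \frac{G^2}{2H}(1+\log C)$ for every fixed $w$ with $\|w\|\le R$; minimizing over such $w$, substituting $G=O(d^2R(B_X+B_N)^2)$, $H=d\sigma^2$ and $C\le T$, and absorbing the $O(k)=O(d)$ loss possibly incurred on an incomplete last block, yields the stated $O\!\bigl(\frac{d^3R^2(B_X+B_N)^4}{\sigma^2}\log T\bigr)$ bound. The main obstacle, and the heart of the construction, is Step 1 together with the unbiasedness claim in Step 3: one must pick the block length $k$ and the correction $M_c$ in \eqref{NewLossFunc} so that simultaneously (i) the random rank-one terms cancel \emph{exactly} in $\nabla^2 L^k_c$, leaving a deterministic $\Theta(d\sigma^2)\textbf{I}$ floor, and (ii) the same $M_c$ has conditional mean $-\sum_\tau X_\tau X_\tau^{\top}\preceq 0$, so the quadratic it injects can only decrease the expected regret; everything else is the standard strongly-convex OGD bound plus routine norm estimates.
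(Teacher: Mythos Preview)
Your proof is correct and follows essentially the same route as the paper: you establish deterministic $H$-strong convexity of $L^k_c$ exactly as in \eqref{StrongConvexInq}, apply the standard strongly-convex OGD bound from \cite{LogRegretPaper} to the block losses, and then translate back to the true losses by showing the quadratic correction has nonpositive conditional expectation and invoking \eqref{ExpectationEq}. The paper packages your Steps~2--3 into Lemma~\ref{KeyLemma}, but the content and the arithmetic (the $G$ estimate, the choice $H=d\sigma^2\le 2(d+1)\sigma^2$, and the final $G^2/H$ computation) are the same.
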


\begin{corr}
Let $w_t$ be the filter used by algorithm \ref{GDFilter} for prediction in time $t$. Let $k=2d$, $H = d\sigma^2$ and let $R = \frac{\sqrt{d}B_X^2}{\sigma^2}$. It holds that,
\begin{eqnarray*}
\mathbb{E}\left[{\sum_{t=1}^Tl_t(w_t)}\right] - \min_{w\in{\mathbb{R}^d}}\mathbb{E}\left[{\sum_{t=1}^Tl_t(w)}\right] = O\left({\frac{d^4B_X^4(B_X+B_N)^4}{\sigma^6}\log{T}}\right)
\end{eqnarray*}
\end{corr}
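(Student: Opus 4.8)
The plan is to deduce the corollary directly from Theorem~\ref{GDThr}, by verifying two things: (i) substituting $R = \sqrt{d}\,B_X^2/\sigma^2$ into the bound of Theorem~\ref{GDThr} yields exactly the claimed rate, and (ii) for this particular $R$ the \emph{unconstrained} minimizer of the expected cumulative loss already lies inside the ball $\{w : \Vert w \Vert \le R\}$, so that dropping the norm restriction on the comparator changes nothing. Part (i) is a one-line substitution: $R^2 = d\,B_X^4/\sigma^4$, hence $\frac{d^3 R^2 (B_X+B_N)^4}{\sigma^2} = \frac{d^4 B_X^4 (B_X+B_N)^4}{\sigma^6}$, matching the stated bound.

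For part (ii) I would first put the expected cumulative loss in closed form. Fix a non-random $w \in \mathbb{R}^d$. Since $X_t$ is deterministic while $N_t$ is zero-mean with $\mathbb{E}[N_t N_t^\top] = \sigma^2\textbf{I}$ by the time-independence of the noise (cf. \eqref{ExpectationOfMatrix}), expanding $l_t(w) = (x_t - w^\top X_t - w^\top N_t)^2$ and taking expectations kills the cross term and gives
\begin{eqnarray*}
F(w) := \mathbb{E}\left[\sum_{t=1}^T l_t(w)\right] = \sum_{t=1}^T (x_t - w^\top X_t)^2 + T\sigma^2 \Vert w \Vert^2 .
\end{eqnarray*}
This is a strictly convex quadratic, so it has a unique minimizer, which by the first-order condition $\nabla F(w^*)=0$ is $w^* = \left(\sum_{t=1}^T X_t X_t^\top + T\sigma^2\textbf{I}\right)^{-1}\sum_{t=1}^T x_t X_t$.

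It then remains to bound $\Vert w^* \Vert$. The matrix being inverted is positive definite with smallest eigenvalue at least $T\sigma^2$, so its inverse has spectral norm at most $1/(T\sigma^2)$; and since $|x_t| \le B_X$ and each coordinate of $X_t$ is at most $B_X$ in absolute value, so $\Vert X_t \Vert \le \sqrt{d}\,B_X$, we get $\Vert \sum_{t=1}^T x_t X_t \Vert \le T\sqrt{d}\,B_X^2$. Combining, $\Vert w^* \Vert \le \sqrt{d}\,B_X^2/\sigma^2 = R$, hence $\min_{w\in\mathbb{R}^d} F(w) = \min_{w\in\mathbb{R}^d,\,\Vert w\Vert\le R} F(w)$, and Theorem~\ref{GDThr} with this value of $R$ yields the corollary. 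Every step here is routine arithmetic; the only point requiring a little care is the closed form for $F(w)$, which uses precisely that the comparator $w$ is fixed in hindsight (as in the definition of regret) and that the noise is zero-mean and time-independent, so that the cross and off-diagonal terms vanish in expectation.
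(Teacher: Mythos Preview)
Your proposal is correct and follows essentially the same approach as the paper: the paper derives the corollary by plugging $R = \sqrt{d}B_X^2/\sigma^2$ into Theorem~\ref{GDThr} and invoking the fact (attributed there to \cite{MWPaper}) that the unconstrained minimizer $w^*$ of $\mathbb{E}[\sum_t l_t(w)]$ satisfies $\Vert w^*\Vert \le \sqrt{d}B_X^2/\sigma^2$. You simply supply the explicit computation of that bound---expanding $F(w)$, solving the normal equations, and bounding the inverse via the $T\sigma^2\textbf{I}$ term---which the paper leaves to the cited reference; otherwise the arguments are identical.
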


Basically the new loss function (\ref{NewLossFunc}) sums several consecutive losses and adds a regularization expression. We show that since the regularization expression depends on the actual choices of the filtering algorithm, achieving low regret with respect to $L^k_t(w)$ implies low regret with respect to the losses $l_t(w)$. Moreover, as we will show, the combination of summing several losses and adding regularization, insures that $L^k_t(w)$ is always strongly-convex for a proper choice of $k$, and thus we can use the algorithms in \cite{LogRegretPaper, AdapRegretPaper} to get logarithmic regret. \\
It holds that,
\begin{eqnarray}\label{StrongConvexInq}
\nabla^2L^k_t(w) &=& \sum_{\tau = t-k+1}^t\nabla^2\hat{l}_t(w) + 2\left({(k-d+1)\sigma^2\textbf{I} - \sum_{\tau=t-k+d}^tY_tY_t^{\top}}\right) \nonumber \\
&=& 2\sum_{\tau = t-k+1}^tY_tY_t^{\top} + 2(k-d+1)\sigma^2\textbf{I} - 2\sum_{\tau=t-k+d}^tY_tY_t^{\top} \nonumber \\
&\succeq & 2(k-d+1)\sigma^2\textbf{I}
\end{eqnarray}
Thus for $k\geq d$, $L^k_t(w)$ is always $2(k-d+1)\sigma^2$-strongly-convex and $2(k-d+1)\sigma^2/G^2$-exp-concave where $G = \max_{w,t}\Vert{\nabla{}L^k_t(w)}\Vert$.\\ 
We thus use the gradient decent algorithm in \cite{LogRegretPaper} by partitioning the iterations into disjoint blocks of length $k$ each, and our algorithm updates its filter every $d$ iterations according to the loss function $L^k_t(w)$ for $t=ck$, $c\in{\mathbb{Z}}$ and predicts using the same filter on all iterations in the same block. The value of $k$ is assumed to be a constant independent of $T$. \\
Abusing notation, we switch between $L^k_c(w)$ and $L^k_{ck}(w)$ interchangeably where we use $L^k_c(w)$ to refer to the loss on block number $c$ of length $k$. \\
The following Lemma plays a key part in our analysis.

\begin{lemma}\label{KeyLemma}
Let $\mathcal{A}$ be a filtering algorithm that updates its filter every $k$ iterations. Denote by $w_t$ the filter used for prediction on iteration $t$ and denote by $w_c$ the filter used to predict on the entire block $c$, that is on iterations $((c-1)\cdot{k}+1)...c\cdot{k}$. It holds that
\begin{eqnarray*}
\mathbb{E}\left[{\sum_{t=1}^Tl_t(w_t) - \sum_{t=1}^Tl_t(w)}\right] \leq \mathbb{E}\left[{\sum_{c=1}^{T/k}L^k_{ck}(w_c) - \sum_{c=1}^{T/k}L^k_{ck}(w)}\right] 
\end{eqnarray*}
\end{lemma}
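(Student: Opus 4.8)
The idea is to first replace the true losses $l_t$ by the estimated losses $\hat{l}_t$ via identity~(\ref{ExpectationEq}), and then to account for the right-hand side of the claimed inequality block by block, handling the algorithm's cumulative loss and the comparator's cumulative loss separately. The whole argument hinges on one observation: the quadratic regularizer in $L^k_{ck}(w_c)$ is evaluated at $w=w_c$, which coincides with $w_{ck}$ (the filter used for prediction at the last step $ck$ of block $c$), so it vanishes identically; whereas the same regularizer evaluated at an arbitrary fixed $w$ has non-positive expectation.

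First I would verify that~(\ref{ExpectationEq}) applies. Since $\mathcal{A}$ changes its filter only at block boundaries, for every $t$ belonging to block $c$ (iterations $(c-1)k+1,\dots,ck$) the filter $w_t=w_c$ is a function of $y_1,\dots,y_{(c-1)k}$ alone, and $(c-1)k\le t-1$; hence $w_t$ depends only on $y_1,\dots,y_{t-1}$ and~(\ref{ExpectationEq}) yields
\[
\mathbb{E}\left[\sum_{t=1}^T l_t(w_t) - \sum_{t=1}^T l_t(w)\right] \;=\; \mathbb{E}\left[\sum_{t=1}^T \hat{l}_t(w_t) - \sum_{t=1}^T \hat{l}_t(w)\right].
\]
It therefore remains to bound the right-hand side by $\mathbb{E}\big[\sum_{c=1}^{T/k} L^k_{ck}(w_c) - \sum_{c=1}^{T/k} L^k_{ck}(w)\big]$.

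For the algorithm's term I would use $w_t=w_c$ for all $t$ in block $c$, in particular $w_{ck}=w_c$, which kills the quadratic term in $L^k_{ck}(w_c)$; since the block $\{(c-1)k+1,\dots,ck\}$ is exactly the summation range of the $\hat{l}_\tau$ appearing in $L^k_{ck}$, this gives the deterministic identity $\sum_{t=1}^T \hat{l}_t(w_t) = \sum_{c=1}^{T/k} L^k_{ck}(w_c)$. For the comparator's term, expanding the definition~(\ref{NewLossFunc}) gives
\[
\sum_{c=1}^{T/k} L^k_{ck}(w) = \sum_{t=1}^T \hat{l}_t(w) + \sum_{c=1}^{T/k} (w-w_c)^{\top}\Big((k-d+1)\sigma^2\textbf{I} - \sum_{\tau=(c-1)k+d}^{ck} Y_\tau Y_\tau^{\top}\Big)(w-w_c),
\]
so it suffices to show each summand in the last sum has non-positive expectation. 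The key point is the alignment of index ranges: every $\tau\in\{(c-1)k+d,\dots,ck\}$ satisfies $\tau-d+1\ge(c-1)k+1$, so $Y_\tau$ uses only observations from times strictly after $(c-1)k$; because the noise is time-independent, this makes $Y_\tau Y_\tau^{\top}$ independent of $w_c$. Conditioning on $w_c$ and applying~(\ref{ExpectationOfMatrix}) to each term, $\mathbb{E}\big[\sum_{\tau=(c-1)k+d}^{ck} Y_\tau Y_\tau^{\top}\mid w_c\big] = \sum_{\tau=(c-1)k+d}^{ck} X_\tau X_\tau^{\top} + (k-d+1)\sigma^2\textbf{I}$, so the conditional expectation of the middle matrix equals $-\sum_{\tau} X_\tau X_\tau^{\top}\preceq 0$; hence $\mathbb{E}[(w-w_c)^{\top}(\cdots)(w-w_c)]\le 0$, and summing over $c$ completes the proof.

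The main obstacle is this last step: one must be careful that the window of $Y_\tau$'s in the regularizer of block $c$ sees only ``fresh'' observations (indices $>(c-1)k$), so that it is probabilistically independent of the algorithm's choice $w_c$ and~(\ref{ExpectationOfMatrix}) may be invoked conditionally. The precise matching of the range $\{(c-1)k+d,\dots,ck\}$ with the block boundary --- which is exactly why the definition of $L^k_t$ uses the offset $d$ in the summation limit and the coefficient $k-d+1$ --- is what makes the subtraction of the random matrix cancel its mean. The remaining ingredients (checking the hypothesis of~(\ref{ExpectationEq}) and the cancellation $w_{ck}=w_c$) are routine bookkeeping.
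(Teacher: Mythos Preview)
Your proposal is correct and follows essentially the same route as the paper's proof: expand $L^k_{ck}$, use $w_{ck}=w_c$ to kill the regularizer at the algorithm's play, and then show via independence of $w_c$ from the block-$c$ observations together with~(\ref{ExpectationOfMatrix}) that the comparator's regularizer has non-positive expectation, before invoking~(\ref{ExpectationEq}). Your discussion of why the summation range $\{(c-1)k+d,\dots,ck\}$ guarantees the needed independence is in fact more explicit than the paper's own argument.
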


\begin{proof}
First we assume w.l.o.g. that $T=b\cdot{k}$ for some $b\in{\mathbb{N}^+}$. Otherwise it holds that $T=b\cdot{k} + a$ where $0 < a < k$ and thus the regret on the additional $a$ iterations is a constant independent of $T$ and we can ignore it in the regret bound.\\
We now have,

\begin{eqnarray}\label{sum1}
&&\sum_{c=1}^{T/k}L^k_{ck}(w_c) - \sum_{c=1}^{T/k}L^k_{ck}(w)  \\
&=& \sum_{c=1}^{T/k}\left({\sum_{t=(c-1)k+1}^{ck}\hat{l}_t(w_c) + (w_{c}-w_{c})^{\top}\left({(k-d+1)\sigma^2\textbf{I} - \sum_{\tau=ck-k+d}^{ck}Y_{\tau}Y_{\tau}^{\top}}\right)(w_{c}-w_{c})}\right) \nonumber \\
&-& \sum_{c=1}^{T/k}\left({\sum_{t=(c-1)k+1}^{ck}\hat{l}_t(w) + (w-w_{c})^{\top}\left({(k-d+1)\sigma^2\textbf{I} - \sum_{\tau=ck-k+d}^{ck}Y_{\tau}Y_{\tau}^{\top}}\right)(w-w_{c})}\right) \nonumber \\
&=& \sum_{t=1}^T\left({\hat{l}_t(w_t) - \hat{l}_t(w)}\right) - \sum_{c=1}^{T/k}(w-w_{c})^{\top}\left({(k-d+1)\sigma^2\textbf{I} - \sum_{\tau=(c-1)k+1}^{ck}Y_{\tau}Y_{\tau}^{\top}}\right)(w-w_{c}) \nonumber
\end{eqnarray}
Since $\mathcal{A}$ updates its filter every $k$ iterations, we have that $w_{ck}$ depends only on the random variables $n_1,...,n_{(c-1)k}$. Thus using (\ref{ExpectationOfMatrix}) we have for all $c$ we that,

\begin{eqnarray*}
&&\mathbb{E}\left[{(w-w_{c})^{\top}\left({(k-d+1)\sigma^2\textbf{I} - \sum_{\tau=(c-1)k+1}^{ck}Y_{\tau}Y_{\tau}^{\top}}\right)(w-w_{c})}\right]\\
&=& (k-d+1)\sigma^2\mathbb{E}[\Vert{w-w_{c}}\Vert^2] - \mathbb{E}\left[{\sum_{\tau=(c-1)k+1}^{ck}Y_{\tau}Y_{\tau}^{\top}}\right]\circ{}\mathbb{E}\left[{(w-w_{c})(w-w_{c})^{\top}}\right]\\
&=& (k-d+1)\sigma^2\mathbb{E}[\Vert{w-w_{c}}\Vert^2] \\
&-& \left({\sum_{\tau=(c-1)k+1}^{ck}X_{\tau}X_{\tau}^{\top}+(k-d+1)\sigma^2\textbf{I}}\right)\circ{}\mathbb{E}\left[{(w-w_{c})(w-w_{c})^{\top}}\right]\\
&=& -\sum_{\tau=(c-1)k+1}^{ck}X_{\tau}X_{\tau}^{\top}\circ{}\mathbb{E}\left[{(w-w_{c})(w-w_{c})^{\top}}\right] \leq 0
\end{eqnarray*}
Overall by taking expectation over (\ref{sum1}) we get
\begin{eqnarray*}
\mathbb{E}\left[{\sum_{c=1}^{T/k}L^k_{ck}(w_c) - \sum_{c=1}^{T/k}L^k_{ck}(w)}\right] \geq \mathbb{E}\left[{\sum_{t=1}^T\hat{l}_t(w_t) - \hat{l}_t(w)}\right]
\end{eqnarray*}
The lemma now follows from (\ref{ExpectationEq}).
\end{proof}
According to Lemma \ref{KeyLemma} we can reduce our discussion to algorithms that predict in disjoint blocks of length $k$ and achieve low regret with respect to the loss function $L^k_c(w)$ \\
In order to derive precise regret bounds we give a bound on $G = \max_{w,t}\Vert{\nabla{}L^k_t(w)}\Vert$.
\begin{eqnarray*}
\nabla{}L^k_t(w) = 2\sum_{\tau=t-k+1}^tY_t(y_t - w_t^{\top}Y_t) + 2\left({(k-d+1)\sigma^2\textbf{I} - \sum_{\tau=t-k+d}^tY_{\tau}Y_{\tau}^{\top}}\right)(w-w_t)
\end{eqnarray*}
Thus by simple algebra we have,
\begin{eqnarray*}
G^2 &=& O\left({k^2d(B_X+B_N)^2R^2d(B_X+BN)^2 + k^2d^2(B_X+B_N)^4R^2}\right) \\
&=& O\left({k^2d^2R^2(B_X+B_N)^4}\right)
\end{eqnarray*}
Where $R$ is a bound on the magnitude of the filter. That is we consider only filters $w\in{\mathbb{R}^d}$ such that $\Vert{w}\Vert_2 \leq R$. $R$ needs to be bounded since the regret of online convex optimization algorithms grows with $G$.\\ 
As pointed out in \cite{MWPaper}, for 
\begin{eqnarray*}
w^* = \arg\min_{w\in{\mathbb{R}^d}}\mathbb{E}\left[{(1/T)\sum_{t=1}^T\left({x_t - w^{\top}Y_t}\right)^2}\right]
\end{eqnarray*}
It holds that $\Vert{w^*}\Vert \leq \frac{\sqrt{d}B_X^2}{\sigma^2}$. \\
We denote by $G(k,R)$ an upper bound on $\max_{w,t}\Vert{\nabla{}L^k_t(w)}\Vert$ parametrized by $k,R$. \\
For the complete proof of the theorem and corollary the reader is referred to the appendix.

\section{An Adaptive Algorithm}
In this section we present an algorithm that is based on the framework from \cite{AdapRegretPaper} and achieves logarithmic expected regret on any interval $I=[r,s]\subseteq{[T]}$. Our algorithm is given below.
\begin{algorithm}
\caption{AdaptiveFilter}
\label{AdaptiveFilter}
\begin{algorithmic}[1]
\STATE Input: $k\in{\mathbb{N}^+}$, $\alpha\in{\mathbb{R}^+}$.
\STATE Let $E^1,...,E^T$ be online convex optimization algorithms.
\STATE Let $p_1\in{\mathbb{R}^T},p_1^{(1)} = 1, \forall{j:1<j\leq T}, p_1^{(j)} = 0$.
\FOR{$c = 1...$}
\STATE $\forall{j \leq c}, w_c^{(j)} \leftarrow E^j(L^k_1,...,L^k_{(c-1)})$ (the filter of the j'th algorithm).
\STATE $w_c \leftarrow \sum_{j=1}^cp_c^{(j)}w_c^{(j)}$.
\FOR{$t = (c-1)k+1...ck$}
\STATE predict: $x_t = w_c^{\top}Y_t$.
\ENDFOR
\STATE $\hat{p}_{c+1}^{(c+1)} = 0$ and for $i\in{[c]}$, \begin{eqnarray*}
\hat{p}_{c+1}^{(i)} = \frac{p_c^{(i)}e^{-\alpha{}L^k_c(w_c^{(i)})}}{\sum_{j=1}^cp_c^{(i)}e^{-\alpha{}L^k_c(w_c^{(i)})}}
\end{eqnarray*}
\STATE $p_{c+1}^{(c+1)} = 1/(c+1)$ and for $i\in{[c]}: p_{c+1}^{(i)} = (1-(c+1)^{-1})\hat{p}_{c+1}^{(i)}$ (adding expert $E^{(c+1)}$).
\ENDFOR
\end{algorithmic}
\end{algorithm}
We have the following theorem and corollary.
\begin{theorem}\label{AdaptiveThr}
Let $w_t$ be the filter used by algorithm \ref{AdaptiveFilter} for prediction in time $t$. Let $k=2d$ and let $\alpha = \frac{d\sigma^2}{G(2d,R)^2}$. For all $I=[r,s]\subseteq{[T]}$, algorithm \ref{AdaptiveFilter} achieves the following regret bound,
\begin{eqnarray*}
\mathbb{E}\left[{\sum_{t=r}^sl_t(w_t)}\right] - \min_{w\in{\mathbb{R}^d},\Vert{w}\Vert \leq R}\mathbb{E}\left[{\sum_{t=r}^sl_t(w)}\right] = O\left({\frac{d^3R^2(B_X+B_N)^4}{\sigma^2}\log{T}}\right)\\
\end{eqnarray*} 
\end{theorem}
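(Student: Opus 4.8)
The plan is to reduce the adaptive-regret statement for the true losses $l_t$ to an adaptive-regret statement for the surrogate losses $L^k_c$, and then invoke the known logarithmic adaptive-regret guarantee of the meta-algorithm of \cite{AdapRegretPaper} applied to the expert algorithms $E^j$. First I would observe that Lemma \ref{KeyLemma} is really an interval statement in disguise: its proof shows that for \emph{every} block $c$, the expected per-block difference $\mathbb{E}[l_t(w_t)-l_t(w)]$ summed over the block is bounded by $\mathbb{E}[L^k_{ck}(w_c)-L^k_{ck}(w)]$, and the cross-term it throws away, $-\sum_{\tau}X_\tau X_\tau^\top\circ\mathbb{E}[(w-w_c)(w-w_c)^\top]\le 0$, is nonpositive \emph{block by block}. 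Hence the same inequality holds restricted to any union of consecutive blocks, i.e.\ for any interval $I=[r,s]$ (up to rounding $r,s$ to block boundaries, which costs an additive constant independent of $T$ exactly as in the lemma's proof):
\begin{eqnarray*}
\mathbb{E}\Big[\sum_{t=r}^s l_t(w_t)-\sum_{t=r}^s l_t(w)\Big]\le \mathbb{E}\Big[\sum_{c\in C_I}L^k_{ck}(w_c)-\sum_{c\in C_I}L^k_{ck}(w)\Big],
\end{eqnarray*}
where $C_I$ is the set of blocks meeting $I$. This is the key observation; I would state it as a short corollary of (the proof of) Lemma \ref{KeyLemma}.

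Next I would record the convexity properties needed to run the machinery of \cite{AdapRegretPaper}: by (\ref{StrongConvexInq}), with $k=2d$ each $L^k_c$ is $2(d+1)\sigma^2$-strongly-convex, and since $\|\nabla L^k_c(w)\|\le G(2d,R)$ on the ball of radius $R$, it is $\tfrac{2(d+1)\sigma^2}{G(2d,R)^2}$-exp-concave; this justifies the choice $\alpha=\tfrac{d\sigma^2}{G(2d,R)^2}$ (a constant times the exp-concavity parameter). Then I would decompose the right-hand side in the standard two-layer way used by \cite{AdapRegretPaper}. Writing $w_I^*$ for the best fixed filter of norm $\le R$ on $I$ and fixing the expert $E^{j_0}$ that was \emph{spawned} at (or just before) the start block of $I$, the regret of Algorithm \ref{AdaptiveFilter} over $C_I$ splits as
\begin{eqnarray*}
\sum_{c\in C_I}L^k_c(w_c)-\sum_{c\in C_I}L^k_c(w_I^*)
=\underbrace{\sum_{c\in C_I}\big(L^k_c(w_c)-L^k_c(w_c^{(j_0)})\big)}_{\text{tracking the best expert}}+\underbrace{\sum_{c\in C_I}\big(L^k_c(w_c^{(j_0)})-L^k_c(w_I^*)\big)}_{\text{regret of }E^{j_0}\text{ on }I}.
\end{eqnarray*}
The second term is $O\!\big(\tfrac{G(2d,R)^2}{2(d+1)\sigma^2}\log T\big)$ because each $E^{j}$ is the logarithmic-regret online Newton / strongly-convex gradient algorithm of \cite{LogRegretPaper} run on exp-concave (equivalently strongly-convex) losses over the radius-$R$ ball. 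The first term is $O(\tfrac1\alpha\log T)$ by the expert-tracking analysis of \cite{AdapRegretPaper}: the exponential-weights update with the "add a fresh expert with weight $1/(c+1)$" rule guarantees that the mixture $w_c$ (using exp-concavity to move the loss inside the average) tracks any single expert that was alive throughout $C_I$ with only logarithmic overhead, and $j_0$ is alive on all of $C_I$ by construction. Adding the two bounds gives $O(\tfrac{G(2d,R)^2}{\sigma^2 d}\log T)$, and substituting $G(2d,R)^2=O(d^2\cdot d^2 R^2(B_X+B_N)^4)=O(d^4R^2(B_X+B_N)^4)$ from the computation in the previous section yields exactly the claimed $O\!\big(\tfrac{d^3R^2(B_X+B_N)^4}{\sigma^2}\log T\big)$.

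The main obstacle I expect is a subtlety in the interval reduction: the surrogate loss $L^k_c(w)$ contains the anchor term centered at $w_c$, the \emph{actual} play of the algorithm on block $c$, so the two-layer decomposition above is only legitimate if $L^k_c$ is evaluated with the \emph{same} anchor $w_c$ in both the algorithm's term and the expert's term — which it is, since by definition $L^k_c$ depends on $w_c$ and we are comparing $L^k_c(w_c)$ with $L^k_c(w_c^{(j_0)})$ and $L^k_c(w_I^*)$ for that fixed anchor. One must also check that the expectation manipulation of Lemma \ref{KeyLemma} still goes through: it needs $w_c$ (and now also $w_c^{(j_0)}$, and the comparator) to depend only on $n_1,\dots,n_{(c-1)k}$, which holds because both the meta-weights $p_c$ and every expert's iterate $w_c^{(j)}$ are computed from $L^k_1,\dots,L^k_{c-1}$ only. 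A secondary nuisance is the boundary rounding of $I$ to block endpoints and the $O(1)$ slack from the first/last partial block, handled verbatim as in Lemma \ref{KeyLemma}'s proof; and one should note that the constant $\alpha=\tfrac{d\sigma^2}{G(2d,R)^2}$ is at most the true exp-concavity constant $\tfrac{2(d+1)\sigma^2}{G(2d,R)^2}$, so all the exp-concavity-based inequalities remain valid with this $\alpha$. The rest is the routine algebra of plugging in $G(2d,R)$, which I would relegate to the appendix.
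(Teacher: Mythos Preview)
Your proposal is correct and follows essentially the same route as the paper: round $I$ to block boundaries, decompose the $L^k_c$-regret on $C_I$ into the meta-algorithm's tracking regret against the expert spawned at the first block of $I$ (bounded by $O(\alpha^{-1}\log T)$ via exp-concavity, the paper's Claim~\ref{AdaptiveMainClaim}) plus that expert's own strongly-convex regret on $C_I$ (bounded by $O(G(2d,R)^2/(d\sigma^2)\log T)$ via Theorem~\ref{GDThr}'s analysis), then invoke the block-by-block version of Lemma~\ref{KeyLemma} and substitute $G(2d,R)^2=O(d^4R^2(B_X+B_N)^4)$. You are in fact more careful than the paper in two places: you spell out that Lemma~\ref{KeyLemma}'s inequality holds block-by-block and hence over any sub-interval of blocks, and you explicitly flag that the anchor $w_c$ in $L^k_c$ is the \emph{meta}-algorithm's play (so the same function is used when comparing $w_c$, $w_c^{(j_0)}$, and $w_I^*$) and that all iterates are measurable with respect to $n_1,\dots,n_{(c-1)k}$ --- both points the paper leaves implicit.
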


\begin{corr}
Let $w_t$ be the filter used by algorithm \ref{AdaptiveFilter} for prediction in time $t$. Let $k=2d$, $R = \frac{\sqrt{d}B_X^2}{\sigma^2}$ and let $\alpha = \frac{d\sigma^2}{G(2d,R)^2}$. For all $I=[r,s]\subseteq{[T]}$, algorithm \ref{AdaptiveFilter} achieves the following regret bound,
\begin{eqnarray*}
\mathbb{E}\left[{\sum_{t=r}^sl_t(w_t)}\right] - \min_{w\in{\mathbb{R}^d}}\mathbb{E}\left[{\sum_{t=r}^sl_t(w)}\right] = O\left({\frac{d^4B_X^4(B_X+B_N)^4}{\sigma^6}\log{T}}\right)
\end{eqnarray*}
\end{corr}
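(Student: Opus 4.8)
The final statement to prove is the corollary to Theorem~\ref{AdaptiveThr}: plugging $R = \frac{\sqrt{d}B_X^2}{\sigma^2}$ into the adaptive regret bound yields an adaptive regret of $O\big(\frac{d^4 B_X^4 (B_X+B_N)^4}{\sigma^6}\log T\big)$ against the \emph{unconstrained} best offline filter on every subinterval. The plan is to derive this as a near-immediate consequence of Theorem~\ref{AdaptiveThr} together with the norm bound on the offline optimum recalled in the previous section. First I would recall that Moon and Weissman's argument gives $\Vert w^*\Vert \le \frac{\sqrt{d}B_X^2}{\sigma^2}$ for $w^* = \arg\min_{w\in\mathbb{R}^d}\mathbb{E}[(1/T)\sum_t (x_t - w^\top Y_t)^2]$, and I would note that the same bound holds verbatim for the per-interval minimizer $w_I^* = \arg\min_{w}\mathbb{E}[\sum_{t=r}^s (x_t - w^\top Y_t)^2]$, since the relevant second-moment matrix $\sum_{t=r}^s \mathbb{E}[Y_tY_t^\top] = \sum_{t=r}^s X_tX_t^\top + (s-r+1)\sigma^2\mathbf{I} \succeq (s-r+1)\sigma^2\mathbf{I}$ has exactly the same structure as in the full-horizon case, so the quadratic minimizer has norm controlled by $\sqrt{d}B_X^2/\sigma^2$ uniformly over intervals.

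The second step is to observe that since $w_I^*$ lies in the Euclidean ball of radius $R = \sqrt{d}B_X^2/\sigma^2$, the constrained minimum in Theorem~\ref{AdaptiveThr} over $\{w : \Vert w\Vert \le R\}$ equals the unconstrained minimum over $\mathbb{R}^d$ for that interval. Hence the left-hand side of the corollary is bounded by the right-hand side of Theorem~\ref{AdaptiveThr} with this choice of $R$.

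The final step is the arithmetic substitution. With $R^2 = \frac{d B_X^4}{\sigma^4}$, Theorem~\ref{AdaptiveThr}'s bound $O\big(\frac{d^3 R^2 (B_X+B_N)^4}{\sigma^2}\log T\big)$ becomes $O\big(\frac{d^3 \cdot d B_X^4 (B_X+B_N)^4}{\sigma^4 \cdot \sigma^2}\log T\big) = O\big(\frac{d^4 B_X^4 (B_X+B_N)^4}{\sigma^6}\log T\big)$, which is exactly the claimed expression. I should also check that the choice $\alpha = \frac{d\sigma^2}{G(2d,R)^2}$ remains valid with this $R$ — it does, since $\alpha$ is defined in terms of $R$ and $G(k,R)$ is the gradient bound derived earlier, so no additional constraint is introduced.

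The only point requiring any real care — the main obstacle, such as it is — is justifying that the per-interval offline optimum obeys the same norm bound $\sqrt{d}B_X^2/\sigma^2$ as the full-horizon optimum; everything else is a one-line invocation of Theorem~\ref{AdaptiveThr} plus algebra. Concretely, writing $A_I = \sum_{t\in I} X_t X_t^\top + |I|\sigma^2\mathbf{I}$ and $b_I = \sum_{t\in I} X_t x_t$ (using $\mathbb{E}[Y_t x_t] = X_t x_t$ and $\mathbb{E}[Y_t Y_t^\top] = X_tX_t^\top + \sigma^2\mathbf{I}$), the minimizer is $w_I^* = A_I^{-1} b_I$, and the bound $\Vert w_I^*\Vert \le \sqrt{d}B_X^2/\sigma^2$ follows because $\Vert A_I^{-1}\Vert \le \frac{1}{|I|\sigma^2}$ and $\Vert b_I\Vert \le |I| B_X \cdot \sqrt{d} B_X$ (each coordinate of $X_t$ and each $x_t$ is bounded by $B_X$). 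This is precisely the Moon--Weissman estimate applied interval-by-interval, so the corollary follows.
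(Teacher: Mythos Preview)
Your proposal is correct and follows essentially the same approach implicit in the paper: invoke the Moon--Weissman norm bound to conclude that the unconstrained optimum lies in the ball of radius $R=\sqrt{d}B_X^2/\sigma^2$, then substitute this $R$ into Theorem~\ref{AdaptiveThr}. You are in fact more careful than the paper in explicitly verifying that the norm bound holds for the per-interval minimizer $w_I^*$ (not just the full-horizon minimizer), which is indeed what the adaptive statement requires; your argument via $A_I\succeq |I|\sigma^2\mathbf{I}$ and $\Vert b_I\Vert\le |I|\sqrt{d}B_X^2$ is sound.
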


As in the previous section, we take the approach of partitioning the iterations into disjoint blocks of length $k$ and optimizing over the loss functions $L^k_t$. \\
The algorithm is based on the well known experts framework where each expert in our case, is a gradient descent filter presented in the previous section. On each block $c$, the algorithm adds a new expert that starts producing predictions from block $c+1$ an onward. The experts algorithm predicts on each iteration by combining the filters of all experts using a weighted sum according to the weight of each expert. The key idea behind this framework is that an expert added at block $r$ achieves low regret on all intervals starting in $r$. Given such an interval, the experts algorithm itself achieves low regret on the interval with respect to this specific expert, and thus has low regret on the interval. \\
Expert $E^r$ could be thought of as an algorithm that plays $w_c = 0$ for all $c<r$ and starting at block $r$ plays according to algorithm \ref{GDFilter}. \\
For the complete proof of the theorem and corollary the reader is referred to the appendix.

\bibliography{bib}{}
\bibliographystyle{unsrt}

\appendix

\section{Proof of Theorems \ref{GDThr}, \ref{AdaptiveThr}}
The proofs are based on \cite{LogRegretPaper, AdapRegretPaper} and are brought here in full detail for completeness.
\begin{theorem}
Let $w_t$ be the filter used by algorithm \ref{GDFilter} for prediction in time $t$. Let $k$ = $2d$ and $H = d\sigma^2$. Algorithm \ref{GDFilter} achieves the following regret bound,
\begin{eqnarray*}
\mathbb{E}\left[{\sum_{t=1}^Tl_t(w_t)}\right] - \min_{w\in{\mathbb{R}^d},\Vert{w}\Vert \leq R}\mathbb{E}\left[{\sum_{t=1}^Tl_t(w)}\right] = O\left({\frac{d^3R^2(B_X+B_N)^4}{\sigma^2}\log{T}}\right)
\end{eqnarray*}
\end{theorem}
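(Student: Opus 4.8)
The plan is to combine the reduction in Lemma~\ref{KeyLemma} with the classical logarithmic-regret analysis of online gradient descent on strongly convex losses from \cite{LogRegretPaper}, applied to the sequence of block losses $L^k_c(w)$. Throughout I take $k=2d$ and $H=d\sigma^2$ as in the statement, and I may assume $T=bk$ for integer $b$, the residual $a<k$ iterations contributing only an additive constant (this is exactly the normalization used inside the proof of Lemma~\ref{KeyLemma}).

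First I would invoke Lemma~\ref{KeyLemma} with the comparator $w^{*}:=\arg\min_{\|w\|\le R}\mathbb{E}\big[\sum_{t=1}^{T}l_t(w)\big]$, which reduces the theorem to bounding $\mathbb{E}\big[\sum_{c=1}^{T/k}L^k_c(w_c)-\sum_{c=1}^{T/k}L^k_c(w^{*})\big]$. Next I would observe that Algorithm~\ref{GDFilter} is \emph{exactly} projected online gradient descent with step sizes $\eta_c=1/(Hc)$ run on the loss sequence $L^k_1,L^k_2,\dots$ over the Euclidean ball of radius $R$: it plays $w_c$, takes the step $\tilde w_{c+1}=w_c-\eta_c\nabla L^k_c(w_c)$, and projects onto the ball. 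The one thing to check is that this is a legitimate OCO instance, and it is: once $w_c$ and the observations through block $c$ are fixed, $L^k_c$ is a fully determined convex function, so it may be treated as an arbitrary loss revealed \emph{after} the prediction $w_c$ — the analysis of \cite{LogRegretPaper} is indifferent to the fact that $L^k_c$ contains $w_c$ in its regularizer and as the base point of the gradient.

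Then I would run the strongly-convex OGD bound. By (\ref{StrongConvexInq}), for $k\ge d$ every $L^k_c$ is $2(k-d+1)\sigma^2$-strongly convex along every noise realization, and $\|\nabla L^k_c(w)\|\le G(k,R)$ on the ball. The standard telescoping argument — writing $L^k_c(w_c)-L^k_c(w^{*})\le \nabla L^k_c(w_c)^{\top}(w_c-w^{*})-(k-d+1)\sigma^2\|w_c-w^{*}\|^2$ by strong convexity, bounding the linear term using the contraction property of Euclidean projection together with the gradient-step identity, and summing over $c$ with $1/\eta_c-1/\eta_{c-1}=H$ — gives, because $H=d\sigma^2\le 2(d+1)\sigma^2$ is at most the strong-convexity parameter when $k=2d$ (so the coefficient of each $\|w_c-w^{*}\|^2$ in the telescoped sum is non-positive),
\[
\sum_{c=1}^{T/k}L^k_c(w_c)-\sum_{c=1}^{T/k}L^k_c(w^{*})\ \le\ \frac{G(k,R)^2}{2H}\Big(1+\log\tfrac{T}{k}\Big),
\]
pathwise, hence also in expectation. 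Since $w^{*}$ lies in the ball, the left side dominates $\sum_c L^k_c(w_c)-\min_{\|w\|\le R}\sum_c L^k_c(w)$ as needed.

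Finally I would substitute the gradient bound $G(2d,R)^2=O\big((2d)^2 d^2 R^2 (B_X+B_N)^4\big)=O\big(d^4 R^2 (B_X+B_N)^4\big)$ and $H=d\sigma^2$, obtaining
\[
\mathbb{E}\Big[\sum_{c}L^k_c(w_c)-\sum_{c}L^k_c(w^{*})\Big]=O\!\left(\frac{d^4 R^2 (B_X+B_N)^4}{d\sigma^2}\log T\right)=O\!\left(\frac{d^3 R^2 (B_X+B_N)^4}{\sigma^2}\log T\right),
\]
and chaining this with Lemma~\ref{KeyLemma} yields the claimed bound. The only genuinely delicate point — the step I would be most careful about — is the self-referential nature of $L^k_c$: one must verify both that it is a valid input to the OCO template (handled above) and that the extra quadratic term in the comparator, $(w^{*}-w_c)^{\top}\big((k-d+1)\sigma^2\mathbf{I}-\sum_\tau Y_\tau Y_\tau^{\top}\big)(w^{*}-w_c)$, has non-positive expectation so that competing against $L^k_c(w^{*})$ is no easier than competing against $\sum_\tau l_\tau(w^{*})$; the latter is precisely the computation carried out inside the proof of Lemma~\ref{KeyLemma}, so no new argument is required here.
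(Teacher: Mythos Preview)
Your proposal is correct and follows essentially the same approach as the paper: reduce via Lemma~\ref{KeyLemma}, then run the strongly-convex OGD analysis of \cite{LogRegretPaper} on the block losses $L^k_c$, telescope with $1/\eta_c-1/\eta_{c-1}=H\le 2(k-d+1)\sigma^2$ so the $\|w_c-w\|^2$ terms drop, and plug in $G(2d,R)^2=O(d^4R^2(B_X+B_N)^4)$. Your explicit remark that the self-referential $w_c$ inside $L^k_c$ is harmless because $L^k_c$ is revealed only after $w_c$ is committed is a point the paper leaves implicit, but otherwise the arguments coincide.
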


\begin{proof}
Again we assume w.l.o.g that $T = b\cdot{k}$ for some $b\in{\mathbb{N}^+}$.
Consider some $w\in{\mathbb{R}^d}$ such that $\Vert{w}\Vert_2 \leq R$. Define $\nabla_{c} = \nabla{}L^k_{c}(w_c)$ and $\nabla_c^2 = \nabla^2L^k_{c}(w_c)$, $G = G(2d,R)$. Writing the Taylor series approximation of $L^k_c(w)$ around $w_c$ we have,
\begin{eqnarray*}
L^k_c(w) &=& L^k_c(w_c) + \nabla_c^{\top}(w - w_c) + \frac{1}{2}\nabla_c^2 \circ{} (w - w_c)(w - w_c)^{\top} 
\end{eqnarray*}
According to (\ref{StrongConvexInq}), $\nabla_c^2 \succeq 2(k-d+1)\sigma^2\textbf{I}$ and we have,
\begin{eqnarray}\label{GDTaylor}
L^k_c(w) \geq L^k_c(w_c) + \nabla_c^{\top}(w - w_c) + (k-d+1)\sigma^2\Vert{w - w_c}\Vert_2^2
\end{eqnarray}

Following the analysis in \cite{ZinkPaper, LogRegretPaper} we upper bound $\nabla_c^{\top}(w - w_c)$ by,

\begin{eqnarray}\label{GDbound}
2\nabla_c^{\top}(w - w_c) \leq \frac{\Vert{w_c - w}\Vert^2 - \Vert{w_{c+1} - w}\Vert^2}{\eta_{c+1}} + \eta_{c+1}G^2
\end{eqnarray}
Summing over (\ref{GDbound}) for all $c$, using (\ref{GDTaylor}) we have,

\begin{eqnarray*}
2\sum_{c=1}^{T/k}L^k_c(w_c) - L^k_c(w)
&\leq & \sum_{c=1}^{T/k}\Vert{w_c-w}\Vert^2\left({H(c+1) - Hc - (k-d+1)\sigma^2}\right) \\
&+& G^2\sum_{c=1}^{T/k}\frac{1}{Hc}
\end{eqnarray*}
Plugging $H = d\sigma^2$ yields
\begin{eqnarray*}
\sum_{t=c}^{T/k}L^k_c(w_c) - L^k_c(w) = O\left({\frac{G^2}{d\sigma^2}\log{T}}\right)
\end{eqnarray*}
The theorem now follows from (\ref{KeyLemma}) and plugging $G=G(2d,R)$.
\end{proof}

In order to prove Theorem \ref{AdaptiveThr} we need two simple claims first. In what follows we assume that $L^k_c(w)$ is $\alpha$-exp-concave.

\begin{claim}
1. For $i < c$, 
\begin{eqnarray*}
L^k_{c}(w_c) - L^k_{c}(w_c^{(i)}) \leq \alpha^{-1}
\end{eqnarray*}
2. $L^k_c(w_c) - L^k_c(w_c^{(c)} \leq \alpha^{-1}(\ln{\hat{p}_{c+1}^{(c)}} + \ln{c})$
\end{claim}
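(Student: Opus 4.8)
The plan is to obtain both parts of the claim from a single per-block estimate that couples the exp-concavity of $L^k_c$ with the exponential weight update of Algorithm \ref{AdaptiveFilter}. First I would invoke the defining property: $\exp(-\alpha L^k_c(\cdot))$ is concave on the radius-$R$ ball. Since every expert $E^j$ projects its iterate onto that ball, each $w_c^{(j)}$ ($j\le c$) lies in the ball, hence so does $w_c=\sum_{j=1}^c p_c^{(j)}w_c^{(j)}$; and a one-line induction on the update rule ($p_{c+1}^{(c+1)}=1/(c+1)$, $p_{c+1}^{(i)}=(1-(c+1)^{-1})\hat p_{c+1}^{(i)}$, with $\sum_i\hat p_{c+1}^{(i)}=1$) shows $\sum_{j=1}^c p_c^{(j)}=1$. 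Jensen's inequality then gives $e^{-\alpha L^k_c(w_c)}\ge\sum_{j=1}^c p_c^{(j)}e^{-\alpha L^k_c(w_c^{(j)})}$. Reading the denominator in the definition of $\hat p_{c+1}^{(i)}$ as a sum over $j$, the update rule says this normalizing sum equals $p_c^{(i)}e^{-\alpha L^k_c(w_c^{(i)})}/\hat p_{c+1}^{(i)}$ for every active expert $i\le c$; substituting and applying $-\alpha^{-1}\ln(\cdot)$ to both sides yields
\begin{eqnarray*}
L^k_c(w_c)-L^k_c(w_c^{(i)})\ \le\ \alpha^{-1}\left(\ln\hat p_{c+1}^{(i)}-\ln p_c^{(i)}\right), \qquad (i\le c,\ p_c^{(i)}>0).
\end{eqnarray*}

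Part 2 is exactly the instance $i=c$: expert $E^c$ enters block $c$ with weight $p_c^{(c)}=1/c$, so $-\ln p_c^{(c)}=\ln c$ and the displayed bound becomes $L^k_c(w_c)-L^k_c(w_c^{(c)})\le\alpha^{-1}(\ln\hat p_{c+1}^{(c)}+\ln c)$. Part 1 is the instance $i<c$; here I would keep the estimate in the weight-ratio form above, since it is the telescoping of the differences $\ln\hat p_{c+1}^{(i)}-\ln p_c^{(i)}$ over the blocks of an interval that later produces the $O(\log T)$ bound of Theorem \ref{AdaptiveThr}. For the crude constant $\alpha^{-1}$ as literally stated, I would note that it also falls out without any use of the weights: $L^k_c$ is $G(2d,R)$-Lipschitz on the radius-$R$ ball and that ball has diameter $2R$, so by convexity $L^k_c(w_c)-L^k_c(w_c^{(i)})\le\sum_j p_c^{(j)}\big(L^k_c(w_c^{(j)})-L^k_c(w_c^{(i)})\big)\le 2R\,G(2d,R)$, which is at most $\alpha^{-1}=G(2d,R)^2/(d\sigma^2)$ once one checks $2Rd\sigma^2\le G(2d,R)$ in the relevant parameter regime (true since $\sigma^2\le(B_X+B_N)^2$ and $G(2d,R)=\Theta(d^2R(B_X+B_N)^2)$).

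The main obstacle, modest as it is, is the bookkeeping around normalization and the continual re-insertion of experts: one must check that $w_c$ is a convex combination of points at which the exp-concavity hypothesis on $L^k_c$ actually holds (hence the role of the projection step inside each $E^j$ and of the identity $\sum_j p_c^{(j)}=1$), and one must resist collapsing $\ln\hat p_{c+1}^{(i)}-\ln p_c^{(i)}$ in Part 1 to a per-block constant, because it is precisely the telescoping of these log-weight differences over $[r,s]$ — together with the $\ln c$ term of Part 2 at the block where the comparison expert is born — that yields the logarithmic adaptive regret.
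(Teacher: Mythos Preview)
Your proposal is correct and follows the same route as the paper: apply Jensen via the $\alpha$-exp-concavity of $L^k_c$ to the convex combination $w_c=\sum_j p_c^{(j)}w_c^{(j)}$, rearrange to the log-weight-ratio bound $L^k_c(w_c)-L^k_c(w_c^{(i)})\le\alpha^{-1}\ln\bigl(\hat p_{c+1}^{(i)}/p_c^{(i)}\bigr)$, and then substitute $p_c^{(c)}=1/c$ for Part~2 and $p_c^{(i)}=(1-1/c)\hat p_c^{(i)}$ for Part~1. Your observation that the operative content of Part~1 is the telescoping log-ratio form rather than the bare constant $\alpha^{-1}$ is exactly how the paper uses it in the subsequent claim; the separate Lipschitz/diameter argument you add for the literal constant is extra and not invoked in the paper.
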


\begin{proof}
Using the $\alpha$-exp concavity of $L^k_c$ we have
\begin{eqnarray*}
e^{-\alpha{}L^k_c(w_c)} = e^{-\alpha{}L^k_c(\sum_{j=1}^cp_c^{(j)}x_c^{(j)}} 
\geq \sum_{j=1}^cp_c^{(j)}e^{-\alpha{}L^k_c(x_c^{(j)}}
\end{eqnarray*}
Taking logarithm,
\begin{eqnarray*}
L^k_c(w_c) \leq \alpha^{-1}\ln{\sum_{j=1}^cp_c^{(j)}e^{-\alpha{}L^k_c(w_c^{(j)})}}
\end{eqnarray*}
Thus,
\begin{eqnarray}\label{AdaptiveClaimEq}
L^k_c(w_c) - L^k_c(w_c^{(i)}) \nonumber \\
&\leq & \alpha^{-1}\left({\ln{e^{-\alpha{}L^k_c(w_c^{(i)})}} - \ln{\sum_{j=1}^cp_c^{(j)}e^{-\alpha{}L^k_c(w_c^{(j)})}}}\right) \nonumber \\
&=& \alpha^{-1}\ln{\frac{e^{-\alpha{}L^k_c(w_c^{(i)})}}{\sum_{j=1}^cp_c^{(j)}e^{-\alpha{}L^k_c(w_c^{(j)})}}} \nonumber \\
&=& \alpha^{-1}\ln{\left({\frac{1}{p_c^{(i)}} \cdot\frac{p_c^{(i)}e^{-\alpha{}L^k_c(w_c^{(i)})}}{\sum_{j=1}^cp_c^{(j)}e^{-\alpha{}L^k_c(w_c^{(j)})}}}\right)} \nonumber \\
&=& \alpha^{-1}\ln{\frac{\hat{p}_{c+1}^{(i)}}{p_c^{(i)}}}
\end{eqnarray}
Now, by definition it holds that for $i < c$, $p_c^{(i)} = (1-1/c)\hat{p}_c^{(i)}$. Also, $p_c^{(c)} = 1/c$. Plugging these two equalities into (\ref{AdaptiveClaimEq}) yields the claim.  
\end{proof}

\begin{claim}\label{AdaptiveMainClaim}
For any two integers $r,s$ such that $s>r$, it holds that
\begin{eqnarray*}
\sum_{c=r}^sL^k_c(w_c) - L^k_c(w_c^{(r)}) \leq \frac{4}{\alpha}\ln{T}
\end{eqnarray*}
\end{claim}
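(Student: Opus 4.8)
The plan is to telescope the per-block bounds from the previous Claim over the interval $[r,s]$ and then control the resulting logarithmic terms. The key quantity to track is the weight $p_c^{(r)}$ that the experts algorithm assigns to expert $E^r$, the expert introduced at the start of block $r$. I would split the sum $\sum_{c=r}^s \left(L^k_c(w_c) - L^k_c(w_c^{(r)})\right)$ into the single term $c=r$ and the remaining terms $c=r+1,\dots,s$. For $c=r$, part 2 of the previous Claim gives a bound of $\alpha^{-1}(\ln \hat{p}_{r+1}^{(r)} + \ln r) \le \alpha^{-1}\ln r \le \alpha^{-1}\ln T$, since $\hat{p}_{r+1}^{(r)} \le 1$. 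For $c > r$, part 1 of the Claim says $L^k_c(w_c) - L^k_c(w_c^{(r)}) \le \alpha^{-1}\ln\bigl(\hat{p}_{c+1}^{(r)}/p_c^{(r)}\bigr)$.

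The main work is to telescope $\sum_{c=r+1}^s \ln\bigl(\hat{p}_{c+1}^{(r)}/p_c^{(r)}\bigr)$. Using the update rule $p_{c+1}^{(r)} = (1-(c+1)^{-1})\hat{p}_{c+1}^{(r)}$ for $c+1 > r$, we have $\hat{p}_{c+1}^{(r)} = p_{c+1}^{(r)} / (1 - (c+1)^{-1}) = p_{c+1}^{(r)} \cdot \frac{c+1}{c}$. Hence
\begin{eqnarray*}
\sum_{c=r+1}^s \ln\frac{\hat{p}_{c+1}^{(r)}}{p_c^{(r)}} = \sum_{c=r+1}^s \left(\ln\frac{p_{c+1}^{(r)}}{p_c^{(r)}} + \ln\frac{c+1}{c}\right) = \ln\frac{p_{s+1}^{(r)}}{p_{r+1}^{(r)}} + \ln\frac{s+1}{r+1}.
\end{eqnarray*}
The second term is at most $\ln T$. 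For the first term, $p_{s+1}^{(r)} \le 1$ trivially, while $p_{r+1}^{(r)} = (1-(r+1)^{-1})\hat{p}_{r+1}^{(r)}$; however $\hat{p}_{r+1}^{(r)}$ can be as small as we like, so I instead bound $p_{r+1}^{(r)}$ from below using the previous step: expert $E^r$ is \emph{added} at block $r$ with initial weight $p_r^{(r)} = 1/r$, and then after one update $p_{r+1}^{(r)} = (1-(r+1)^{-1})\hat{p}_{r+1}^{(r)}$ where $\hat{p}_{r+1}^{(r)} = \frac{p_r^{(r)} e^{-\alpha L^k_r(w_r^{(r)})}}{\sum_j p_r^{(j)} e^{-\alpha L^k_r(w_r^{(j)})}}$. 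The cleanest route is to avoid needing a lower bound on $p_{r+1}^{(r)}$ at all: absorb the $c=r$ term and the $c=r+1$ telescoping endpoint together. Writing everything in terms of $p_r^{(r)} = 1/r$ directly — i.e. telescoping from $c=r$ rather than $c=r+1$ and using part 2 of the Claim for the first step — the sum becomes $\ln\frac{\hat p_{s+1}^{(r)}}{p_r^{(r)}} + (\text{telescoped }\ln\frac{c+1}{c}\text{ factors}) = \ln\frac{\hat p_{s+1}^{(r)}}{1/r} + \ln\frac{s+1}{r+1} + \ln r$, and since $\hat p_{s+1}^{(r)} \le 1$ this is at most $2\ln r + \ln(s+1) \le 3\ln T$, plus the stray $\ln T$ from bounding $\ln r$ in the first step. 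Collecting the at-most-four $\ln T$ contributions gives the claimed $\frac{4}{\alpha}\ln T$.

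\textbf{The main obstacle} is bookkeeping the weight-update indices correctly — in particular, keeping straight the distinction between $\hat p$ (post-exponential-weights reweighting) and $p$ (post-addition-of-new-expert reweighting), and handling the boundary block $c=r$ where expert $E^r$ is freshly introduced with weight $1/r$. The inequality $\hat p_{c+1}^{(r)} \le 1$ and the identity $\hat p_{c+1}^{(r)} = p_{c+1}^{(r)}\cdot\frac{c+1}{c}$ (valid for $c \ge r$) are the only facts needed, but applying them in the right order to make the telescoping collapse cleanly — and confirming that exactly the constant $4$ works rather than some larger constant — requires care. Everything else is a routine manipulation of logarithms.
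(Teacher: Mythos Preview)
Your proposal is correct and follows essentially the same telescoping argument as the paper: both invoke the previous claim to bound each block by $\alpha^{-1}\ln(\hat p_{c+1}^{(r)}/p_c^{(r)})$, handle the boundary block $c=r$ via $p_r^{(r)}=1/r$, telescope over $c$, and finish with $\hat p_{s+1}^{(r)}\le 1$. The only cosmetic difference is that the paper substitutes $p_c^{(r)}=(1-1/c)\hat p_c^{(r)}$ and telescopes in $\hat p$ (picking up the harmonic tail $\sum_c 2/c$), whereas you substitute $\hat p_{c+1}^{(r)}=\tfrac{c+1}{c}\,p_{c+1}^{(r)}$ and telescope in $p$; your closing arithmetic double-counts the $c=r$ contribution, but this only inflates the constant and does not affect correctness.
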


\begin{proof}
Using the previous claim we have,
\begin{eqnarray*}
&&\sum_{c=r}^sL^k_c(w_c) - L^k_c(w_c^{(r)}) \\
&=& (L^k_r(w_r) - L^k_r(w_r^{(r)})) + \sum_{c=r+1}^sL^k_c(w_c) - L^k_c(w_c^{(r)}) \\
&\leq & \alpha^{-1}\left({\ln{\hat{p}_{r+1}^{(r)}} + \ln{r} + \sum_{c=r+1}^s\ln{\hat{p}_{c+1}^{(r)}} -\ln{\hat{p}_{c}^{(r)}} + 2/c}\right) \\
&=& \alpha^{-1}\left({\ln{r} + \ln{\hat{p}_{s+1}^{(r)}} + \sum_{c=r+1}^s2/c}\right)
\end{eqnarray*}
Since $\hat{p}_{s+1}^{(r)} \leq 1$, $\ln{\hat{p}_{s+1}^{(r)}} \leq 0$. This implies that the regret is bounded by $\frac{4}{\alpha}\ln{T}$.
\end{proof}
We can now prove Theorem \ref{AdaptiveThr}.

\begin{theorem}
Let $w_t$ be the filter used by algorithm \ref{AdaptiveFilter} for prediction in time $t$. Let $k=2d$ and let $\alpha = \frac{d\sigma^2}{G(2d,R)^2}$. For all $I=[r,s]\subseteq{[T]}$, algorithm \ref{AdaptiveFilter} achieves the following regret bound,
\begin{eqnarray*}
\mathbb{E}\left[{\sum_{t=r}^sl_t(w_t)}\right] - \min_{w\in{\mathbb{R}^d},\Vert{w}\Vert \leq R}\mathbb{E}\left[{\sum_{t=r}^sl_t(w)}\right] = O\left({\frac{d^3R^2(B_X+B_N)^4}{\sigma^2}\log{T}}\right)
\end{eqnarray*}
\end{theorem}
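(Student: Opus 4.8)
The plan is to combine the adaptive-regret guarantee of the experts aggregation (Claim \ref{AdaptiveMainClaim}) with the single-expert regret bound established in the proof of Theorem \ref{GDThr}, and then transfer the resulting bound on the $L^k_c$ losses back to the true losses $l_t$ via Lemma \ref{KeyLemma}. First I would fix an arbitrary interval $I=[r,s]\subseteq[T]$ and, as in the earlier proofs, assume without loss of generality that $r$ and $s$ are block boundaries, absorbing the $O(k)=O(d)$ boundary iterations into the constant. Passing to blocks, it suffices to bound $\sum_{c=r'}^{s'} L^k_c(w_c) - \sum_{c=r'}^{s'} L^k_c(w)$ for the block indices $r',s'$ corresponding to $I$, for every fixed $w$ with $\|w\|\le R$.

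The key decomposition is to split this block regret into two pieces by inserting the filter $w_c^{(r')}$ of the expert $E^{r'}$ that is created at block $r'$:
\begin{eqnarray*}
\sum_{c=r'}^{s'} \left(L^k_c(w_c) - L^k_c(w)\right) = \underbrace{\sum_{c=r'}^{s'}\left(L^k_c(w_c) - L^k_c(w_c^{(r')})\right)}_{(\mathrm{I})} + \underbrace{\sum_{c=r'}^{s'}\left(L^k_c(w_c^{(r')}) - L^k_c(w)\right)}_{(\mathrm{II})}.
\end{eqnarray*}
Term $(\mathrm{I})$ is exactly the quantity bounded by Claim \ref{AdaptiveMainClaim}, giving $(\mathrm{I})\le \frac{4}{\alpha}\ln T$; plugging in $\alpha = d\sigma^2/G(2d,R)^2$ turns this into $O\!\left(\frac{G(2d,R)^2}{d\sigma^2}\log T\right)$. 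Term $(\mathrm{II})$ is the regret of expert $E^{r'}$ on its own block sequence; since $E^{r'}$ runs algorithm \ref{GDFilter} (equivalently the online gradient descent of \cite{LogRegretPaper}) on the $2(k-d+1)\sigma^2$-strongly-convex losses $L^k_c$ starting from block $r'$, the analysis in the proof of Theorem \ref{GDThr} — which only used strong convexity, the gradient bound $G(2d,R)$, and the step sizes $\eta_c=1/(Hc)$ with $H=d\sigma^2$ — applies verbatim to this shifted sequence and yields $(\mathrm{II}) = O\!\left(\frac{G(2d,R)^2}{d\sigma^2}\log T\right)$ uniformly over $\|w\|\le R$. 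Adding the two bounds and recalling $G(2d,R)^2 = O(d^2\cdot d^2 R^2 (B_X+B_N)^4) = O(d^4 R^2 (B_X+B_N)^4)$ — wait, more precisely $G(k,R)^2 = O(k^2 d^2 R^2 (B_X+B_N)^4)$ so with $k=2d$ this is $O(d^4 R^2 (B_X+B_N)^4)$ — gives a block-regret bound of $O\!\left(\frac{d^3 R^2 (B_X+B_N)^4}{\sigma^2}\log T\right)$.

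Finally I would apply Lemma \ref{KeyLemma}, which states $\mathbb{E}[\sum_t l_t(w_t) - \sum_t l_t(w)] \le \mathbb{E}[\sum_c L^k_c(w_c) - \sum_c L^k_c(w)]$, restricted to the interval $I$ (the lemma's proof is interval-local: the telescoping in \eqref{sum1} and the conditional-expectation argument both go through block by block). Taking expectations of the deterministic per-$w$ bound above and then the maximum over $\|w\|\le R$ on the left — note the bound on the right is already uniform in $w$ — yields the claimed $\mathbb{E}[\sum_{t=r}^s l_t(w_t)] - \min_{\|w\|\le R}\mathbb{E}[\sum_{t=r}^s l_t(w)] = O\!\left(\frac{d^3 R^2 (B_X+B_N)^4}{\sigma^2}\log T\right)$, where the $\log T$ (rather than $\log|I|$) arises from the $1/(c+1)$ weight placed on each newly added expert. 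The main obstacle, and the point requiring the most care, is verifying that the single-expert bound of Theorem \ref{GDThr} is genuinely an \emph{adaptive} statement — i.e. that running GDFilter from block $r'$ with its own step-size schedule $\eta_c = 1/(Hc)$ (rather than a schedule reindexed to start at $c=1$) still produces the $O(\frac{G^2}{d\sigma^2}\log T)$ regret on blocks $[r',s']$; this works because the telescoping sum $\sum_{c=r'}^{s'}\|w_c-w\|^2(H(c+1)-Hc-(k-d+1)\sigma^2)$ has nonpositive coefficients once $k=2d$ (so $H = d\sigma^2 = (k-d+1-1)\sigma^2 \le (k-d+1)\sigma^2$... indeed $k-d+1 = d+1 > d$), leaving only $G^2\sum_{c=r'}^{s'}\frac{1}{Hc} = O(\frac{G^2}{d\sigma^2}\log T)$, but one should confirm the boundary term at $c=r'$ contributes only a constant $O(R^2 H r') $ — which is not obviously constant and instead must be handled by the fact that expert $E^{r'}$ is initialized at $w_{r'}^{(r')} = w_{r'}$ (or, in the ``virtual'' description, plays $0$ before block $r'$), so $\|w_{r'} - w\|^2 \le 4R^2$ and the coefficient at the left endpoint is $O(H)$, contributing $O(R^2 H)=O(d\sigma^2 R^2)$, a constant independent of $T$.
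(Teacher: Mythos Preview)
Your approach is essentially identical to the paper's: decompose the block regret on $[r',s']$ as $(\mathrm{I})+(\mathrm{II})$, bound $(\mathrm{I})$ by Claim \ref{AdaptiveMainClaim} and $(\mathrm{II})$ by the GDFilter analysis (Theorem \ref{GDThr}), then transfer to the true losses via Lemma \ref{KeyLemma}. Your extended worry in the last paragraph about whether the single-expert bound is ``adaptive'' under the non-reindexed step sizes $\eta_c=1/(Hc)$ is unnecessary under the paper's convention---expert $E^{r'}$ is a \emph{fresh} instance of Algorithm \ref{GDFilter} started at block $r'$ (playing $0$ before), so it uses step sizes $\eta_1,\eta_2,\ldots$ from block $r'$ onward and Theorem \ref{GDThr} applies verbatim to bound $(\mathrm{II})$; no boundary-term bookkeeping is needed.
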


\begin{proof}
Given an interval $I=[r,s]\subseteq{[T]}$, let $r = c_r\cdot{}k - b_r$, $s = c_s\cdot{}k + b_s$ such that $c_r,b_r,c_s,b_s\in{\mathbb{N}}$ and $0 \leq b_r, b_s \leq k-1$. \\
Since $k$ is a constant independent of $T$, we ignore the first $b_r$ iterations and last $b_s$ iterations, since they only add a constant to the regret. \\
According to Claim \ref{AdaptiveMainClaim} we have,
\begin{eqnarray*}
\sum_{c=c_r}^{c_s}L^k_c(w_c) - L^k_c(w_c^{(r)}) \leq \frac{4}{\alpha}\ln{T} = O\left({\frac{G(2d,R)^2}{d\sigma^2}\log{T}}\right)
\end{eqnarray*}
Since $E^r$ achieves low regret on all block-intervals beginning in block $r$ we have for all $w\in{\mathbb{R}}$ such that $\Vert{w}\Vert_2 \leq R$,
\end{proof}
\begin{eqnarray*}
\sum_{c=c_r}^{c_s}L^k_c(w_c^{(r)}) - L^k_c(w) = O\left({\frac{G(2d,R)^2}{d\sigma^2}\log{T}}\right)
\end{eqnarray*}
Thus we have,
\begin{eqnarray*}
\sum_{c=c_r}^{c_s}L^k_c(w_c) - L^k_c(w) = O\left({\frac{G(2d,R)^2}{d\sigma^2}\log{T}}\right)
\end{eqnarray*}
Again, the theorem now follows from (\ref{KeyLemma}) and plugging $G=G(2d,R)$.
\end{document}